\pgfplotsset{compat=newest}
\newenvironment{breakablealgorithm}
  {
   \begin{center}
     \refstepcounter{algorithm}
     \hrule height.8pt depth0pt \kern2pt
     \renewcommand{\caption}[2][\relax]{
       {\raggedright\textbf{\fname@algorithm~\thealgorithm} ##2\par}%
       \ifx\relax##1\relax 
         \addcontentsline{loa}{algorithm}{\protect\numberline{\thealgorithm}##2}%
       \else 
         \addcontentsline{loa}{algorithm}{\protect\numberline{\thealgorithm}##1}%
       \fi
       \kern2pt\hrule\kern2pt
     }
  }{
     \kern2pt\hrule\relax
   \end{center}
  }
\newcommand{\PL}{\mathrm{PL}}
\newcommand{\err}{\mathrm{err}}
\theoremstyle{plain}
\newtheorem{theorem}{Theorem}[section]
\newtheorem{corollary}[theorem]{Corollary}
\theoremstyle{definition}
\theoremstyle{remark}
\begin{document}

\title{Interpretable classifiers for tabular data via 
feature selection and discretization}




\author{Reijo Jaakkola}
\author{Tomi Janhunen}
\author{Antti Kuusisto}
\author{Masood~Feyzbakhsh~Rankooh}
\author{Miikka Vilander\footnote{The authors are listed in alphabetical order. This is a preprint of a paper in 
DAO-XAI 2024 (Data meets Applied Ontologies in Explainable AI) 
in Santiago de Compostela.
T. Janhunen, A. Kuusisto, M. F. Rankooh and M. Vilander were
supported by the Academy of Finland consortium project Explaining AI via Logic (XAILOG), grant numbers 345633 (Janhunen) and 345612 (Kuusisto). A. Kuusisto and M. Vilander were also supported by the Academy of Finland project Theory of computational logics, grant numbers 352419, 352420, 353027, 324435, 328987.}}
\affil{Tampere University, Finland}
\date{}









\maketitle

\begin{abstract}
We introduce a method for computing immediately human interpretable yet accurate classifiers from tabular data. The classifiers obtained are short Boolean formulas, computed via first discretizing the original data and then using feature selection coupled with a very fast algorithm for producing the best possible Boolean classifier for the setting. We demonstrate the approach via 12 experiments, obtaining results with accuracies comparable to ones obtained via random forests, XGBoost, and existing results for the same datasets in the literature. In most cases, the accuracy of our method is in fact similar to that of the reference methods, even though the main objective of our study is the immediate interpretability of our classifiers. We also prove a new result on the probability that the classifier we obtain from real-life data corresponds to the ideally best classifier with respect to the background distribution the data comes from.
\end{abstract}


\section{Introduction}\label{sec:introduction}

Explainability and human interpretability are becoming an
increasingly important part of research on machine learning. 
In addition to the immediate benefits of explanations and interpretability in 
scientific contexts, the capacity to provide explanations behind
automated decisions has already been widely addressed also on 
the level of legislation. 
For example, the European General Data Protection Regulation 
\cite{euregulationstuff} and California Consumer Privacy Act 
\cite{california-oag} both 
refer to the right of individuals to get explanations of
automated decisions concerning them.

This article investigates interpretability in the framework of 
tabular data. Data in tabular form is important for numerous scientific and real-life contexts, being often even regarded as 
the \emph{most important} data form: see, e.g., 
\cite{swartzarmon22,tabularsurvey}. 
%
%
%
%
%
%
%
%
%
%
%
While explainable AI (or XAI) methods custom-made for  pictures and text cannot be readily used as such in the setting of tabular data \cite{pawelczyk20}, numerous successful XAI methods for the tabular setting exist; see the survey \cite{realtabularsurvey} for an overview.

In the current paper, we focus on producing \emph{highly interpretable} classifiers for tabular data. By ``interpretable'' we here mean that the inner workings of the classifier are directly readable from the classifier itself, which contrasts with explaining the operation of an external, black-box classifier \cite{rudin2019explaining}.\footnote{However, our method could of course also be used to explicate existing black-box models in a model agnostic way via first generating data with the black-box model to be explained and then using the method to produce a closely corresponding interpretable classifier.} 
For most of our experiments, the classifiers we
obtain are \emph{globally} interpretable, meaning that the classifier itself immediately reveals how it works on every possible input. Global interpretability stems from the classifiers being extremely short logical formulas and thereby directly intelligible. Now, globally interpretable classifiers obtained by our method are automatically also \emph{locally} interpretable, meaning that they provide an easy way of explicating why any particular input was classified in a particular way. 
For a reasonably small part of our experiments, the classifiers obtained are only locally but not quite globally interpretable.



The classifiers we produce are given in the form of short Boolean DNF-formulas which can naturally be conceived as simply Boolean concepts. One of the key issues that makes our approach possible is the surprising power of successful feature selection. Indeed, in our formulas, we use very small 
numbers of attributes, making them short and thus interpretable. Already in \cite{holte}, it was observed that by using a single attribute, one could get, for many of
the 16 datasets studied in that paper, an accuracy not drastically different from the then state-of-the-art decision trees. We apply this ``simplicity first'' approach in our method, together with a very rough discretization of numerical attributes to Boolean form. While we focus on interpretability, we obtain also relatively \emph{accurate} classifiers. Indeed, our classifiers perform surprisingly well in comparison to widely recognized methods despite the fact that generally, there exists an obvious trade-off between
accuracy and interpretability. 

\subsection{Overview of our method and contributions}


Our method works as follows. We have a tabular dataset $S$ with numerical and categorical attributes $X_1, \dots , X_k$ and a binary target attribute $q$. Our goal is to produce a classifier for $q$ based on $X_1,\dots , X_k$. 
%
%
Our method is based on the following steps; \emph{see
Section \ref{sec:overfitting} for 
more details.}

\begin{enumerate}
\item 
We first discretize the data to Boolean form using a 
very rough method of chopping numerical 
predicates at the median. 
\item 
Then, for increasing numbers $\ell$, we use feature selection to choose $\ell$ suitable attributes. After that, we compute the \emph{best possible} Boolean
formula for predicting $q$. By ``best possible'' we
mean the Boolean formula has the least percentage of misclassified points among all Boolean
formulas using the $\ell$ chosen attributes.
In other words, the formula has the least empirical error with respect to the 
0-1 loss function. The formula is given in DNF-form. 
\item 
We use nested cross-validation to test the accuracy of the formulas and find the most accurate formula using up to 10 attributes (the parameter 10 can be adjusted, but the choice 10 turned out sufficient for our experiments: see Section \ref{sec:overfitting} for a discussion). 
We look through the sequence of formulas obtained above for different numbers $\ell$ of features. Among the formulas with accuracy within one percentage point of the most accurate formula in the sequence, we select the one with the least number of features. We use the attributes in this formula to train the final formula via the entire training data.
Finally, we slightly 
simplify the final obtained formula---keeping it in DNF-form---via using a 
standard method from the literature.
\end{enumerate}

To demonstrate the robustness of our method, the discretization step is performed very roughly. Also the feature selection procedures are not critical to our approach; we use three readily available ones and choose the best final formula.

Concerning step 2, for computing the best possible DNF-formula, we present a very efficient algorithm running in time $\mathcal{O}(|W||\tau||T|)$, where $|W|$ is the number of rows in the data, $|\tau|$ the number of \emph{selected}
features and $|T| \leq \min(|W|,2^{|\tau|})$ the number of different \emph{row types} realized in the data with the selected features. The algorithm is fixed-parameter linear when $|\tau|$ is bounded by a constant---which it is in our scenario. 
If $|\tau|$ was not constant, the algorithm would have
running time $\mathcal{O}(|W|^2)$. 
Now, all our tests ran fast and smoothly on a laptop even with large datasets containing up to 423 680 rows. For six of the datasets, the runs took less than 30 seconds per split. Of the remaining datasets, five took less than 12 minutes per split. Even the two largest datasets had a runtime of less than an hour. The hyperparameter optimizations of random forests and XGBoost took systematically longer than running our method. For example, for the largest dataset, the hyperparameter optimization for XGBoost took roughly two and half hours per split.

As already indicated, we test our method on 12 tabular datasets: seven binary classification tasks from the UCI machine learning repository; a high-dimensional binary classification task with biological data; and four benchmark sets from \cite{grinsztajn2022why}. See Section \ref{sec:experiments} for a detailed list. 
In each test, we compare the accuracy of our classifier to a result obtained by state-of-the art classifiers. Now, taking into account that tabular data is still a challenge to deep learning methods \cite{grinsztajn2022why,tabularsurvey}, we use both
XGBoost \cite{xgboost} and Sklearn's implementation of random forests \cite{breiman2001random} as reference methods. These models are widely recognized for their
performance on tabular data \cite{swartzarmon22,grinsztajn2022why}. For the UCI datasets we additionally compare to the method of \cite{jelia23} based on the length of formulas. This method is computationally too inefficient to use on the other datasets we consider.
In addition to the reference methods, in
relation to 10 of the experiments, 
we also report results found in the literature for the used datasets. 

For 11 datasets we use standard ten-fold cross-validation and report the average accuracy as well as the standard deviation over the ten splits. For the high-dimensional dataset Colon with less than 100 data points we instead use leave-one-out cross-validation.

All formulas encountered in our experiments are small enough to be easily \emph{locally} interpretable. In the local interpretation process, the interpreter has a row of classified data which is then compared to the conjunctions of the DNF-classifier our method produces. A positively classified row will match with precisely one conjunction of the DNF-classifier, and a negatively interpreted one will clash with at least one attribute of each conjunction.

In addition to local interpretability, most of the formulas produced in the experiments are in fact so simple that they can be readily \emph{globally} interpreted, meaning that their behaviour on any input is clear simply by the form of the formula. Globally interpreting a short DNF-formula involves looking at each of the few conjunctions separately and interpreting the meaning of their particular combination of attributes. The behaviour of the entire formula is then given by the fact that it accepts the cases given by the conjunctions and rejects everything else.

As a concrete example of an interpretable classifier, we get the formula 
\[
\neg p_1 \lor (\neg p_2 \land \neg p_3)
\] 
from an experiment on a Breast Cancer dataset concerning the benignity of breast tumors. The attributes $p_1$, $p_2$ and $p_3$ relate to measures of uniformity of cell size, bare nuclei and bland chromatin. 
The accuracy of this formula on the corresponding test data is 94.1 percent. The average accuracy of our method over ten splits of the Breast Cancer data is 95.9 percent, while XGBoost obtains 97.1 percent and  random forests 97.4 percent. We stress that our formula is, indeed, highly interpretable, while the classifiers obtained by XGBoost and random forests are of very large and of a black box nature.

As another example, from a Colon dataset we get the formula 
\[
p_1 \lor p_2
\]
from the majority of our tests. 
The attributes $p_1$ and $p_2$ have been selected from a total of 5997 Booleanized attributes related to genes. The accuracy of our method on this dataset is 80.6 percent, while random forests get 83.9 percent and XGBoost 72.6 percent. 

\textbf{In relation to accuracy, the experiments are summarized in Figure \ref{fig:resultsum}.} In Table \ref{tab:featurenumbers} we report the average number of attributes (features) used by the final formulas and the dimensions of the datasets.

In general, our method has similar accuracy to the reference methods. 
For the UCI datasets reported on the left in Figure \ref{fig:resultsum}, our method obtains accuracies on par with and sometimes better than the state-of-the-art reference methods. For example, on the Hepatitis dataset we obtain an average accuracy of 80.7 percent compared to the 78.2 percent of random forests, 79 percent of XGBoost and 79.4 percent of the formula-size method of \cite{jelia23}. The four benchmark datasets from \cite{grinsztajn2022why} reported on the right in Figure \ref{fig:resultsum} were the most difficult for our method, but even for these we obtained some surprisingly accurate and reasonably interpretable formulas. 
For example we obtain a formula of the form
\begin{align*}
(p_1 &\land p_2 \land p_3 \land \neg p_4) \lor (p_2 \land \neg p_1 \land \neg p_5 \land \neg p_4) \\
&\lor (p_4 \land p_3 \land \neg p_1 \land \neg p_5 \land \neg p_2)
\end{align*}
from an experiment on the RoadSafety dataset. This formula is quite short and has a test accuracy of 73.2 percent compared to the average 83.2 percent achieved by black box classifiers.


Concerning our experiments, we stress once more that the main advantage of our method is interpretability. As Table \ref{tab:featurenumbers} indicates, most of our classifiers use very small numbers of features, thus being interpretable. Some classifiers obtained in the tests are longer, but still within the bounds of local (while not necessarily global) interpretability.

Finally, a key insight behind our method is the idea of computing the
ideal classifiers (i.e., the above mentioned best possible 
Boolean classifiers) and the fact that this can be 
done fast using the $\mathcal{O}(|W||\tau||T|)$ algorithm (where
$|T| \leq \min(|W|,2^{|\tau|})$)
when $\tau$ is small. Since small $\tau$ is 
often sufficient---which is another key insight in
the method---the approach indeed works quite 
accurately and fast. The ideal classifiers being 
central to the approach, we call the method 
\emph{the ideal classifier method}. A possible 
alternative for this is would be the \emph{ideal 
DNF-method}.

In addition to experiments, we also prove a novel theoretical sample bound result that
can be used for estimating whether a Boolean classifier obtained 
from data is in fact an ideal Boolean classifier with respect to
the background distribution the data comes from. See Section \ref{samplesizesection} for the related theorem and the Appendix for the proof. Our result is in flavour similar to the various results in statistics that estimate the sizes of samples needed for obtaining a given confidence interval; see, e.g., \cite{StatisticsBook} for further details. Results of this form can indeed be useful for estimating if the
classifiers we 
obtain from datasets are in fact best possible classifiers 
(for the given features) with respect to the underlying probability distribution the data originates from. In Section \ref{samplesizesection} we illustrate how our result can be used in practice in relation to the datasets of the current paper. 

%
%
%
%
%
%
%
%
%
%
%
%

\subsection{Further 
related work}

Concerning further related work, while the literature on explainability is rather extensive, only a relatively small
part of it is primarily based on logic. 
See \cite{silva} for a survey on logic-based explainability, called \emph{formal 
explainable} AI (or FXAI) there. We mention here the 
two prominent works dealing with minimality notions for 
Boolean classifiers and pioneering much of the recent work in FXAI, \cite{ShihCD18}
and \cite{alexey}. 

Like the articles \cite{ShihCD18}
and \cite{alexey}, in fact most of logic-based explainability differs significantly from the current paper. Firstly, most papers in the field concern local rather than global explanations, and also inherent interpretability (as opposed to explainability of existing classifiers) is rarely the main focus. However, there is one method that is close enough to ours to require an explicit and direct analysis in the current paper. That method---let us here call it the \emph{formula-size method} (FSM)---is investigated in \cite{jelia23}. Just like the current work, FSM uses a validation-based approach to avoid overfitting and find Boolean formulas with a small error over real-life datasets. A major difference between our approach and FSM is that in our method, we investigate increasing \emph{numbers of features} as opposed to increasing \emph{length bounds} on formulas as in FSM. The algorithm of FSM searches through the space of all possible formulas of increasing lengths, making it impossible to use with larger datasets. This contrasts with the fixed-parameter linear $\mathcal{O}(|W|2^{|\tau|})$ algorithm we use with $|\tau|$ being a constant.\footnote{In our experiments we constrain $|\tau|$ to be at most $10$.} Our algorithm outputs classifiers with the minimum error in relation to the set of input features
used, whereas FSM optimizes
with respect to formula size. Together with other experiments, we also describe in Section \ref{sec:results} tests we ran to
compare FSM with our method.

Concerning yet further related work on interpretable AI, the articles \cite{OptimalRuleLists,OptimalScoringSystems} investigate the use of sparse rule lists and scoring systems which are optimized with respect to their error and size. These models are sparse in the sense that they try to use a small number of features, which makes them interpretable. The empirical results reported in these papers also demonstrate the surprising effectiveness of these interpretable models on real-world tabular data. Using the methods proposed in the articles \cite{OptimalRuleLists,OptimalScoringSystems} requires---as in the case of FSM in \cite{jelia23}---solving very resource-consuming combinatorial optimization problems, since in addition to their error, classifiers are also optimized with respect to their size. For yet further related work on interpretable AI, see \cite{rudin2019explaining}.




\section{Preliminaries}\label{sec:preliminaries}




A \textbf{vocabulary} is a finite set of 
symbols $p_i$ referred to as 
\textbf{proposition symbols}. 
For a  vocabulary $\sigma=\{p_1,\dots,p_k\}$ the syntax of propositional logic $\PL[\sigma]$ over $\sigma$ is given by the grammar
$\varphi ::= p \mid \neg \varphi \mid \varphi \land \varphi \mid \varphi \lor \varphi$,
where $p \in \sigma$. We also define the exclusive or $\varphi \oplus \psi := (\varphi \lor \psi) \land \neg(\varphi \land \psi)$ as an abbreviation. A formula $\varphi \in \PL[\sigma]$ is in \textbf{disjunctive normal form} (DNF) if $
\varphi = \bigvee_{i = 1}^m \psi_i$,
where each $\psi_i$ is a conjunction of \textbf{literals} (i.e., formulas $p$ or $\neg p$ where $p \in \sigma$).

A $\sigma$-model is a structure $M = (W,V)$, where $W$ is a finite non-empty set called the \textbf{domain} of $M$ and $V: \sigma \to \mathcal{P}(W)$ is a \textbf{valuation} which assigns to each $p \in \sigma$ the set of points $V(p) \subseteq W$ where $p$ is true. Such a valuation extends in the usual way into a valuation $V : \PL[\sigma] \to \mathcal{P}(W)$, with $\wedge$, $\vee$ and $\neg$ corresponding to the intersection, union and complementation (with respect to $W$) operations. A formula $\varphi \in \PL[\sigma]$ is true in the point $w \in W$ of a $\sigma$-model $M = (W, V)$, denoted $M, w \models \varphi$, if $w \in V(\varphi)$. Note that thereby each formula $\varphi$ corresponds to a subset $V(\varphi)$ of $W$. For $\varphi, \psi \in \PL[\tau]$, we define $\varphi \vDash \psi$ if for all models $M$ and all $w \in W$, $M, w \vDash \varphi$ implies $M, w \vDash \psi$.

A \textbf{$\sigma$-type} $t$ is a conjunction such that for each $p \in \sigma$, precisely one of the literals $p$ and $\neg p$ is a conjunct of $t$ and $t$ has no other conjuncts. We assume some standard bracketing and ordering of literals so that there are exactly $2^{|\sigma|}$ $\sigma$-types. We denote the set of $\sigma$-types by $T_{\sigma}$. Note that each point $w \in W$ of a $\sigma$-model $M = (W, V)$ satisfies exactly one $\sigma$-type. Thus, $\sigma$-types induce a partition of the domain $W$. On the other hand, from the truth table of a formula $\varphi \in \PL[\sigma]$ one can obtain an equivalent formula $\psi$ that is a disjunction of types and thus in DNF.


For a vocabulary $\tau$, let $\mu : T_{\tau \cup \{q\}} \to [0,1]$ be a probability distribution. Here $q$ is the separate target attribute of the classification task. For $t \in T_\tau$, we define $\mu(t) = \mu(t \land q) + \mu(t \land \neg q)$. The distribution $\mu$ corresponds to the real-world phenomenon that gives rise to practical data. Thus, we generally assume that $\mu$ is unknown and define the \textbf{true error} (or \textbf{risk}) of a formula $\varphi \in \PL[\tau]$ with respect to $\mu$ as
\[\err_\mu(\varphi) := \Pr_{t \sim \mu}[t \models \varphi \oplus q] = \sum\limits_{\substack{t\, \in\, T_{\tau \cup \{q\}} \\ \, t \vDash \varphi \oplus q}} \mu(t) .\]
This is the probability that $\varphi$ disagrees with $q$. Our goal is to obtain formulas with a small true error. This is made difficult by the fact that $\mu$ is unknown. We can, however, estimate the true error via available data.

Let $M = (W, V)$ be a $\tau \cup \{q\}$-model. For us, $M$ corresponds to the available tabular data. Given a propositional formula $\varphi \in \PL[\tau]$, we define the \textbf{empirical error} (or \textbf{empirical risk}) of $\varphi$ with respect to $M$ as
\[\err_M(\varphi) := \frac{|V(\varphi \oplus q)|}{|W|}.\]
The empirical error $\err_M(\varphi)$ is easily computable as the proportion of points where $\varphi$ disagrees with $q$. If $M$ is fairly sampled from $\mu$, then by the law of large numbers $\err_M(\varphi) \to \err_\mu(\varphi)$ almost surely when $|W| \to \infty$.

Given a distribution $\mu$, the formula
\[
    \varphi^\mu_{\mathrm{id}} := \bigvee \bigg\{t \in T_\tau \mid \frac{\mu(t \land q)}{\mu(t)} \geq 1/2\bigg\},
\]
which we call the \textbf{ideal classifier}, has the smallest true error with respect to $\mu$ among the formulas in $\PL[\tau]$. This is a syntactic, logic-based representation of what is known as the \textbf{Bayes classifier} in the literature \cite{lugosi},  not to be confused with naive Bayesian classifiers. A Bayes classifier always gives the best possible prediction, and clearly so does an ideal classifier. 
Now, as $\mu$ is unknown, the ideal classifier is again a theoretical goal for us to approximate.

Given a $\tau \cup \{q\}$-model $M = (W, V)$, the formula
\[\varphi^M_{\mathrm{id}} := \bigvee \bigg\{t \in T_\tau \mid \frac{|V(t \land q)|}{|V(t)|} \geq 1/2\bigg\},\]
which we call the \textbf{empirical ideal classifier}, has the smallest empirical error with respect to $M$ among the formulas in $\PL[\tau]$. This formula is easily computable and an essential tool of our study.

\section{Feature selection and overfitting}\label{sec:overfitting}

Our general goal is to use, for a suitable set
$\tau$ of attributes, the empirical ideal classifier to approximate the ideal classifier. In this section, we specify our methodology and show bounds on sufficient sample size to guarantee that the two classifiers are identical with high probability.

Let $\tau$ be a small set of promising attributes chosen from the initially possibly large set of all attributes.
We describe a quadratic time algorithm to obtain the empirical ideal classifier. 
The pseudocode \textbf{Algorithm \ref{algo:overfitter}} below describes a formal implementation of this algorithm.
Basically, we scan the points $w \in W$ once. 
%
For every $\tau$-type $t$ that is realized in $M = (W,V)$, we initiate and maintain two counters, $n_t$ and $c_t$. The first counter $n_t$ counts how many times $t$ is realized in $M$, while $c_t$ counts how many times $t \land q$ is realized in $M$. The number $c_t/n_t$ is then the probability $|V(t \land q)|/|V(t)|$. The empirical ideal classifier $\varphi_{id}^{M}$ can be constructed by taking a disjunction over all the types $t$ which are realized in $M$ and for which $c_t/n_t \geq 1/2$.

\begin{breakablealgorithm}
\caption{Compute the ideal classifier $\varphi^M_{id}$}\label{algo:overfitter}
\textbf{Input:} a $(\tau \cup \{q\})$-model $M = (W,V)$
\begin{algorithmic}[1]
\STATE $T_M \leftarrow \varnothing$ \COMMENT{All the $\tau$-types realized in $M$ will be stored in the set $T_M$}
\STATE \textbf{for} $w \in W$ \textbf{do}
\STATE \quad \quad $t \leftarrow$ the $\tau$-type of $w$
\STATE \quad \quad \textbf{if} $t \not\in T_M$ \textbf{then}
\STATE \quad \quad \quad \quad $T_M \leftarrow T_M \cup \{t\}$
\STATE \quad \quad \quad \quad $n_t, c_t \leftarrow 0, 0$
\STATE \quad \quad $n_t \leftarrow n_t + 1$
\STATE \quad \quad \textbf{if} $w \in V(q)$ \textbf{then}
\STATE \quad \quad \quad \quad $c_t \leftarrow c_t + 1$
\STATE $\varphi_{id}^M \leftarrow \bot$
\STATE \textbf{for} $t \in T_M$ \textbf{do}
\STATE \quad \quad \textbf{if} $c_t / n_t \geq 1/2$ \textbf{then}
\STATE \quad \quad \quad \quad $\varphi_{id}^M \leftarrow \varphi_{id}^M \lor t$
\STATE \textbf{return $\varphi_{id}^M$}
\end{algorithmic}
\end{breakablealgorithm}

It is clear that this algorithm runs in polynomial time with respect to the size of $M$, the size being $\mathcal{O}(|W||\tau|)$. A more precise analysis shows that the running time of this algorithm is $\mathcal{O}(|W||T_M||\tau|),$ where $|T_M|$ counts the number of $\tau$-types that are realized in $M$. Since $|T_M| \leq |W|$, this gives a worst case time complexity of $\mathcal{O}(|W|^2|\tau|)$.  
If $\tau$ has a fixed size bound, as in our experiments below, then this reduces to a linear time algorithm. Moreover, we note that clearly the size $|\varphi_{id}^M|$ is  $\mathcal{O}(|W||\tau|)$. 

%
%

A full step-by-step description of our method follows:
\begin{enumerate} 
\item 
We begin with a tabular dataset with features $X_1, \dots , X_{m},q$, where $q$ is a Boolean target attribute. We denote this full training data by $W_0$.
%
%
%
\item We randomly separate 30 percent of $W_0$ as validation data $W_{\mathrm{val}}$. The remaining part we call the training data $W_{\mathrm{train}}$.
We then discretize (or Booleanize) 
both of these datasets, ending up with 
tabular datasets with 
strictly Boolean attributes $p_1,\dots , p_k,q$. To this end, we simply chop the numerical attributes at the median value of the training data $W_{\mathrm{train}}$, above median meaning ``yes'' and at most median corresponding to ``no''. For categorical attributes we use one-hot encoding. 
\item We iterate steps 4 and 5 for increasing numbers $\ell$ of features from 1 to 10.
\item
We run three feature selection procedures on the training data, each selecting $\ell$ of the attributes $p_1,\dots , p_k$ to be 
used for classification. 
\item 
Suppose one procedure selected the set $F = \{ p_{i_1},\dots , p_{i_{\ell}}\}$ of features.
We use Algorithm 1 to compute the empirical ideal classifier for $q$ on the data $W_{\mathrm{train}}$ using the attributes in $F$. Note that this is the \emph{best possible} classifier for $q$ over $F$, given in DNF-form. We do this step for all three sets of features given by the procedures. We select the formula with the highest validation accuracy $r$ on the data $W_{\mathrm{val}}$ and record the tuple $(\ell, F, r)$ for future steps.
%
%
%
%
%
%
\item 
Once step 5 has halted, we look back at the tuples $(\ell, F, r)$ recorded. We select the feature set $F_\ell$ corresponding to the smallest number $\ell$ with validation accuracy $r_\ell$ within one percentage point of the best accuracy in the full sequence
\[
(1, F_1, r_1), \dots, (10, F_{10}, r_{10}).
\] 
We run the Booleanization again using the median of the full training data $W_0$ and compute the best possible classifier one last time using the selected set $F_\ell$ of features and the data $W_0$. 
\item 
Finally, we simplify the selected DNF-formula via \texttt{simplify\_logic} from SymPy. This gives a potentially simpler logically equivalent DNF-formula. 
\end{enumerate} 

To demonstrate the robustness of our method, the discretization steps are performed very roughly, using
simply the medians. 
Also the specific feature selection procedures are not in any way custom-made for our method; we use three readily available ones (see Sect. \ref{sec:experiments} for details) and use the one with the best accuracy.



Regarding step 5, choosing the formula with the best validation accuracy is done to avoid overfitting to the training data. With the increase of the number of features, the validation accuracy generally first improves and at some point starts to decline. This decline is a sign of overfitting. As overfitting often occurs already for small numbers of selected attributes, we may regard overfitting as \emph{useful} for the method, leading to \emph{short} formulas. However, in some cases, the accuracy stagnates and the overfitting point seems hard to find. In these cases, we nevertheless stop at ten features, the maximum considered in the method. For some datasets it could be necessary to go further before the accuracy stagnates, but for our experiments, ten features was (more than) enough.


While the last formula obtained in step 5 can be quite long, step~6 helps to obtain shorter formulas. 
By choosing an earlier formula in the obtained sequence with almost the same accuracy, we can reduce the number of features used, drastically improving the interpretability of our formulas without sacrificing much accuracy. 

The formula obtained from step 6 is often very short and in most cases even 
globally interpretable. 
Nevertheless, as a final step, a standard formula simplifying tool  \texttt{simplify\_logic} from SymPy is used, and this gives a potentially even shorter DNF-formula.
We note that even 
without using \texttt{simplify\_logic}, all formulas encountered in our experiments are readily \emph{locally} interpretable. Recall from the Introduction that in the local interpretation process, a positively classified input will match with precisely one conjunction of the DNF-classifier, and a negatively interpreted one will clash with at least one attribute of each conjunction. 




\subsection{Bounds on sample size}\label{samplesizesection}

As our method consists of using the empirical ideal classifier as an approximation of the ideal classifier, we would like to have some guarantees on when the two classifiers are the same. We next present a theorem which tells us how large samples we need in order to be confident that the empirical ideal classifier is also the true ideal classifier.


The lower bound provided by our theorem depends in a crucial way on how ``difficult'' the underlying distribution $\mu$ is. More formally, we say that a probability distribution $\mu : T_{\tau \cup \{q\}} \to [0,1]$ \(\varepsilon\)-\textbf{separates} \(t \in T_\tau\), if we have that $|\mu(t \land q) - \mu(t \land \neg q)| \geq \varepsilon$. The larger the $\varepsilon$ is, the easier it is to detect via sampling which of the types $t \land q$ and $t \land \neg q$ has the higher probability of occurring.

The formal statement of the theorem is now as follows. See Appendix \ref{appendix:proof_of_the_theorem} for the proof.

\begin{theorem}\label{thm:sample_size}
    Fix a vocabulary $\tau$, a proposition symbol $q\not\in \tau$ and a probability distribution $\mu : T_{\tau \cup \{q\}} \to [0,1]$. Let $\varepsilon, \delta > 0$ and 
    \[n \geq \frac{2\ln(2^{|\tau| + 1}/\delta)}{\varepsilon^2}.\] 
    Then with probability at least $1 - \delta$, the empirical ideal classifier with respect to a sample $M$ of size $n$ agrees with the ideal classifier with respect to $\mu$ on every $t \in T_\tau$ which is \(\varepsilon\)-separated by \(\mu\). In particular, if $\mu$ \(\varepsilon\)-separates every \(t \in T_\tau\), then the empirical ideal classifier is the ideal classifier with probability at least $1 - \delta$. 
\end{theorem}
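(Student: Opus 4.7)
The plan is to reduce the statement to a pointwise concentration bound followed by a union bound over all $\tau$-types. For each $t \in T_\tau$, set $D_t := \mu(t \land q) - \mu(t \land \neg q)$ and let $\hat D_t := (|V(t \land q)| - |V(t \land \neg q)|)/n$ be its empirical counterpart computed from the sample $M$. By inspection of the definitions, the ideal classifier $\varphi^\mu_{\mathrm{id}}$ includes $t$ as a disjunct iff $D_t \geq 0$, and $\varphi^M_{\mathrm{id}}$ includes $t$ iff $\hat D_t \geq 0$. Thus, whenever $t$ is $\varepsilon$-separated, i.e.\ $|D_t| \geq \varepsilon$, the two classifiers are guaranteed to agree on $t$ as soon as $|\hat D_t - D_t| < \varepsilon$: in the case $D_t \geq \varepsilon$ this forces $\hat D_t > 0$, and symmetrically for $D_t \leq -\varepsilon$.

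The main quantitative step is to bound $\Pr[\,|\hat D_t - D_t| \geq \varepsilon\,]$ for a fixed $t$. Here I would, for each sampled point $w_i$, introduce the centred indicator $Y_i^t := \mathbf{1}[w_i \models t \land q] - \mathbf{1}[w_i \models t \land \neg q] \in [-1,1]$, whose expectation is $D_t$ and whose empirical mean is exactly $\hat D_t$. Since the $Y_i^t$ are i.i.d.\ and take values in an interval of width $2$, Hoeffding's inequality gives
\[\Pr\bigl[\,|\hat D_t - D_t| \geq \varepsilon\,\bigr] \;\leq\; 2\exp(-n\varepsilon^2/2).\]
A union bound over the $2^{|\tau|}$ types in $T_\tau$ then yields that the probability that some $\varepsilon$-separated $t$ is handled differently by the two classifiers is at most $2^{|\tau|+1}\exp(-n\varepsilon^2/2)$. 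Requiring this expression to be at most $\delta$ and solving for $n$ recovers exactly the bound in the theorem.

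The one place where care will be needed, and the step most likely to cost a constant if done naively, is the choice of random variable in the Hoeffding step. The obvious alternative is to apply Hoeffding separately to the two indicators $\mathbf{1}[w_i \models t \land q]$ and $\mathbf{1}[w_i \models t \land \neg q]$ with tolerance $\varepsilon/2$ each, and then union-bound over $2^{|\tau|+1}$ events; this replaces $2^{|\tau|+1}$ by $2^{|\tau|+2}$ inside the logarithm of the sample-size bound. Working with the centred difference $Y_i^t$ directly, whose natural range $[-1,1]$ supplies the correct factor of $1/2$ in the Hoeffding exponent, is exactly what yields the stated constant. Apart from that, the argument is routine concentration plus a union bound, and the claim about all $\varepsilon$-separated types, as well as the ``in particular'' clause, follow at once.
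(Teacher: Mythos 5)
Your proof is correct and follows essentially the same route as the paper: a per-type concentration bound via Hoeffding followed by a union bound over the $2^{|\tau|}$ types, yielding the identical sample-size expression. The only real difference is cosmetic --- the paper applies two \emph{one-sided} Hoeffding bounds to the counts of $t \land q$ and $t \land \neg q$ separately with tolerance $\varepsilon/2$ each, which (contrary to your closing remark) also gives $2^{|\tau|+1}$ rather than $2^{|\tau|+2}$ inside the logarithm, since only one tail is needed per count; your centred variable $Y_i^t$ arrives at the same constant by a slightly tidier route.
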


\begin{corollary}\label{cor:sample_size}
    Fix a vocabulary $\tau$, a proposition symbol $q\not\in \tau$ and a probability distribution $\mu : T_{\tau \cup \{q\}} \to [0,1]$. Let $\varepsilon, \delta > 0$ and 
    \[n \geq \frac{2^{2|\tau|+1}\ln(2^{|\tau| + 1}/\delta)}{\varepsilon^2}.\]
    Then with probability at least $1 - \delta$, we have that 
    \[\err_\mu(\varphi_{id}^M) < \err_\mu(\varphi_{id}^\mu) + \varepsilon.\]
\end{corollary}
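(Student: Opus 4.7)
The plan is to derive the corollary as a direct consequence of Theorem~\ref{thm:sample_size} by rescaling the separation parameter. I would set $\varepsilon' := \varepsilon / 2^{|\tau|}$ and substitute it into the sample-size bound of the theorem, which yields
\[
\frac{2\ln(2^{|\tau| + 1}/\delta)}{(\varepsilon')^2} \;=\; \frac{2^{2|\tau|+1}\ln(2^{|\tau| + 1}/\delta)}{\varepsilon^2},
\]
exactly the bound hypothesised in the corollary. Consequently, with probability at least $1 - \delta$, the theorem furnishes a deterministic event on which $\varphi_{id}^M$ and $\varphi_{id}^\mu$ include the same $\varepsilon'$-separated types as disjuncts. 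The remainder of the argument runs entirely on this event.

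Next I would bound the excess true error type by type. For any $\varphi \in \PL[\tau]$ and any $t \in T_\tau$, the contribution of $t$ to $\err_\mu(\varphi)$ equals $\mu(t \land \neg q)$ if $\varphi$ includes $t$ as a disjunct and $\mu(t \land q)$ otherwise. Because $\varphi_{id}^\mu$ makes the locally optimal choice on each type, its per-type contribution is $\min\{\mu(t \land q),\, \mu(t \land \neg q)\}$. Hence
\[
\err_\mu(\varphi_{id}^M) - \err_\mu(\varphi_{id}^\mu) \;=\; \sum_{t \in D}\, \bigl|\mu(t \land q) - \mu(t \land \neg q)\bigr|,
\]
where $D \subseteq T_\tau$ is the set of types on which the two classifiers disagree. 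The point of the first step was to guarantee that every $t \in D$ fails to be $\varepsilon'$-separated by $\mu$, so each summand above is strictly less than $\varepsilon'$.

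Since $|D| \leq |T_\tau| = 2^{|\tau|}$, the preceding displays combine into
\[
\err_\mu(\varphi_{id}^M) - \err_\mu(\varphi_{id}^\mu) \;<\; 2^{|\tau|} \cdot \varepsilon' \;=\; \varepsilon,
\]
which is the claim; the degenerate case $D = \varnothing$ gives an excess of $0 < \varepsilon$ trivially. The main (and only) obstacle in the plan is choosing the rescaling factor correctly: the factor $2^{|\tau|}$ arises because up to $2^{|\tau|}$ non-separated types may each contribute an error gap of almost $\varepsilon'$, and matching this worst case is precisely what forces the $2^{2|\tau|}$ blow-up in the sample size. Everything else reduces to the per-type optimality of the Bayes classifier, which is built into the definition of $\varphi_{id}^\mu$.
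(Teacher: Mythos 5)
Your proposal is correct and follows essentially the same route as the paper's proof: rescale the separation threshold to $\eta = \varepsilon/|T_\tau| = \varepsilon/2^{|\tau|}$, apply Theorem~\ref{thm:sample_size} with that threshold (which produces exactly the stated sample bound), and bound the excess true error by summing the per-type gaps $|\mu(t\land q)-\mu(t\land\neg q)| < \eta$ over the at most $2^{|\tau|}$ disagreeing types. Your write-up is in fact slightly more careful than the paper's, which writes the error difference with the sign reversed and sums over all non-separated types rather than only the disagreement set.
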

\begin{proof}
    Fix \(\varepsilon, \delta > 0\). Given \(\eta > 0\) we know that if \(\varphi_{id}^M\) agrees with \(\varphi_{id}^\mu\) on every \(t \in T_\tau\) which is \(\eta\)-separated by \(\mu\), then
    \begin{align*}
    \err_\mu(\varphi_{id}^\mu) - \mathrm{err}_\mu(\varphi_{id}^M) 
    &= \sum_{\substack{t \in T_\tau \\ \mu \text{ does not \(\eta\)-separate } t}} |\mu(t \land q) - \mu(t \land \neg q)| \\
    &< |T_\tau| \eta.
    \end{align*}
    Setting \(\eta := \varepsilon / |T_\tau|\) and applying Theorem \ref{thm:sample_size} gives us the desired result.
\end{proof}

To illustrate the use of this latter bound, suppose that $|\tau| = 3, \delta = 0.01$ and $\varepsilon = 0.05$. Corollary \ref{cor:sample_size} shows that we need a sample $M$ of size at least $18887$ to know that with probability at least $0.99$ the theoretical error of \(\varphi_{id}^M\) is less than that of \(\varphi_{id}^\mu\) plus \(0.05\). The datasets that we consider in Section \ref{sec:experiments} contain three datasets (Covertype, Electricity, RoadSafety) that have at least $18887$ data points. Thus we can be fairly confident that for these datasets any empirical ideal classifier that uses at most three features obtains an accuracy which is similar to that of the true ideal classifier on those features.





\section{Experiments}\label{sec:experiments}

\subsection{Experimental setup}

\begin{figure}[!tb]
\begin{minipage}{.5\textwidth}
\centering
\begin{tikzpicture}
    \begin{axis}[
            reverse legend,
            xbar,
            bar width=.2cm,
            width=8cm,
            height=15cm,
            legend style={at={(0,-0.05)},
                anchor=north west},
            legend columns = 2,
            legend image code/.code={
            \draw [#1] (-0.1cm,-0.1cm) rectangle (0.25cm,0.1cm); },
            axis on top,
            yticklabels={BankMarketing,BreastCancer,CongressionalVoting, GermanCredit, HeartDisease, Hepatitis, StudentDropout},
            ytick=data,
            xtick pos=left,
            ytick style={draw=none},
            ytick = {1,2,3,4,5,6,7},
            y dir=reverse,
            y tick label style={anchor=west,xshift=.15cm, yshift=5.5*\pgfkeysvalueof{/pgfplots/major tick length}},
            nodes near coords,
            visualization depends on={(101-\thisrow{accuracy}) \as \offset},
            node near coords style={shift={(axis direction cs:\offset,0)},/pgf/number format/.cd,fixed zerofill, precision=1},
            nodes near coords align={horizontal},
            xmin = 0.1, xmax = 115,
            xlabel={\%},
            x label style={at={(axis description cs:0.98,0)},anchor=north},
            minor xtick = {100},
            grid={minor}
        ]

        \draw[fill=blue!6!white] (100,0) rectangle (115, 8);

        \addplot[dashed, purple!80!black,fill=purple!10!white, 
                ] 
                table[x=accuracy, y=dataset, 
                col sep=&, row sep=crcr] {
                accuracy &   dataset             & deviation \\
                96.6       &   2        & 5 \\
                96.3       &   3 & 5 \\
                78.0       &   4        & 5 \\
                81.6       &   5        & 5 \\
                87.2       &   6           & 5 \\
        };

        \addplot[teal!80!black,fill=teal!30!white, error bars/.cd,
                error bar style = {thick, black},
                error mark options={
                rotate=90,
                mark size=2pt,
                line width=0.8pt
                },
                x dir=both,
                x explicit] table[x=accuracy, y=dataset, x error=deviation, col sep=&, row sep=crcr] {
                accuracy &  dataset &               deviation \\
                88.9    &   1   &       1.3 \\
                95.5    &   2    &       2.5 \\
                96.3    &   3 &   1.8 \\
                70.1      &   4    &     3.6 \\
                73.6    &   5    &       12.2 \\
                79.4    &   6       &       10.4 \\
                80.4    &   7  &       2.1 \\
        };

        \addplot[brown!80!black,fill=brown!30!white, error bars/.cd,
                error bar style = {thick, black},
                error mark options={
                rotate=90,
                mark size=2pt,
                line width=0.8pt
                },
                x dir=both,
                x explicit] table[x=accuracy, y=dataset, x error=deviation, col sep=&, row sep=crcr] {
                accuracy &  dataset &               deviation \\
                89.0    &   1   &       2.0 \\
                97.1    &   2    &       2.2 \\
                95.4    &   3 &   2.0 \\
                74.7      &   4    &     3.8 \\
                80.8    &   5    &       7.2 \\
                79.0    &   6       &       13.4 \\
                87.4    &   7  &       1.3 \\
        };

        \addplot[red!80!black,fill=red!30!white, error bars/.cd,
                error bar style = {thick, black},
                error mark options={
                rotate=90,
                mark size=2pt,
                line width=0.8pt
                },
                x dir=both,
                x explicit] table[x=accuracy, y=dataset, x error=deviation, col sep=&, row sep=crcr] {
                accuracy &  dataset &               deviation \\
                89.1    &   1   &       2.0 \\
                97.4    &   2    &       2.1 \\
                96.1    &   3 &   3.4 \\
                74.3    &   4    &       4.0 \\
                81.8   &   5    &       8.0 \\
                78.2    &   6       &       12.5 \\
                86.7    &   7  &       2.0 \\
        };
        
        \addplot[blue!80!black,fill=blue!30!white, error bars/.cd,
                error bar style = {thick, black},
                error mark options={
                rotate=90,
                mark size=2pt,
                line width=0.8pt
                },
                x dir=both,
                x explicit] table[x=accuracy, y=dataset, x error=deviation, col sep=&, row sep=crcr] {
                accuracy &   dataset             & deviation \\
                88.8       &   1       & 1.3   \\
                95.9       &   2        & 1.7 \\
                96.3       &   3 & 2.9 \\
                70.8       &   4        & 5.1 \\
                81.2       &   5        & 5 \\
                80.7       &   6           & 14.2 \\
                81.1       &   7      & 1.5 \\
        };

    \end{axis}
\end{tikzpicture}
\end{minipage}%
\begin{minipage}{.5\textwidth}
\centering
\begin{tikzpicture}
    \begin{axis}[
            reverse legend,
            xbar,
            bar width=.2cm,
            enlarge y limits=0.11,
            width=8cm,
            height=15cm,
            legend style={at={(0,-0.05)},
                anchor=north west},
            legend columns=2,
            legend image code/.code={
            \draw [#1] (-0.1cm,-0.1cm) rectangle (0.25cm,0.1cm); },
            axis on top,
            ytick=data,
            xtick pos=left,
            ytick style={draw=none},
            ytick = {
            2, 3, 4, 5, 6},
            yticklabels = {
            Covertype, Electricity, EyeMovement,  RoadSafety, Colon},
            y dir=reverse,
            y tick label style={anchor=west,xshift=.15cm, yshift=5*\pgfkeysvalueof{/pgfplots/major tick length}},
            nodes near coords,
            visualization depends on={(101-\thisrow{accuracy}) \as \offset},
            node near coords style={shift={(axis direction cs:\offset,0)},/pgf/number format/.cd,fixed zerofill, precision=1},
            nodes near coords align={horizontal},
            xmin = 0.1, xmax = 115,
            xlabel={\%},
            x label style={at={(axis description cs:0.98,0)},anchor=north},
            minor xtick = {100},
            grid={minor},
        ]
        
       \draw[fill=blue!6!white] (100,0) rectangle (115, 8);

       \draw (0, 5.43) -- (115, 5.43);
        

         \addplot[dashed, purple!80!black,fill=purple!10!white, 
                node near coords style={precision=0}
                ] 
                table[x=accuracy, y=dataset, 
                col sep=&, row sep=crcr] {
               accuracy &   dataset             & deviation \\
                84       &   6            &    \\
                92       &   2        & 5.0 \\
                89       &   3      & 5.0 \\
                65       &   4      & 5.0 \\
                77       &   5       & 5.0 \\
        };

        \addplot[brown!80!black,fill=brown!30!white, error bars/.cd,
                error bar style = {thick, black},
                error mark options={
                rotate=90,
                mark size=2pt,
                line width=0.8pt
                },
                x dir=both,
                x explicit] table[x=accuracy, y=dataset, x error=deviation, col sep=&, row sep=crcr] {
                accuracy &   dataset             & deviation \\
                72.6       &   6            &    \\
                97.8       &   2        & 0.08 \\
                93.3       &   3      & 0.3 \\
                67.2       &   4      & 3.0 \\
                83.2       &   5       & 0.4 \\
        };

        \addplot[red!80!black,fill=red!30!white, error bars/.cd,
                error bar style = {thick, black},
                error mark options={
                rotate=90,
                mark size=2pt,
                line width=0.8pt
                },
                x dir=both,
                x explicit] table[x=accuracy, y=dataset, x error=deviation, col sep=&, row sep=crcr] {
                accuracy &   dataset             & deviation \\
                83.9       &   6            &    \\
                97.6       &   2        & 0.3 \\
                88.1       &   3      & 6.7 \\
                65.0       &   4      & 1.8 \\
                80.0       &   5       & 1.0 \\
        };
        
        \addplot[blue!80!black,fill=blue!30!white, error bars/.cd,
                error bar style = {thick, black},
                error mark options={
                rotate=90,
                mark size=2pt,
                line width=0.8pt
                },
                x dir=both,
                x explicit] table[x=accuracy, y=dataset, x error=deviation, col sep=&, row sep=crcr] {
                accuracy &   dataset            & deviation \\
                80.6       &   6            &   \\
                74.7       &   2        & 0.3 \\
                72.1       &   3      & 0.5 \\
                55.3       &   4      & 1.7 \\
                73.1       &   5       & 0.4 \\
        };

    \end{axis}
\end{tikzpicture} 
\end{minipage}
\centering
\begin{tikzpicture}
\begin{axis}[
    xbar,
    hide axis,
    transpose legend,
    legend columns = 2,
    xmin = 1, xmax = 2,
    ymin = 1, ymax = 2,
    legend image code/.code={
            \draw [#1] (-0.1cm,-0.1cm) rectangle (0.25cm,0.1cm); },
    legend style={cells={align=left}, legend cell align = {left}}
    ]
    \addlegendimage{blue!80!black,fill=blue!30!white}
    \addlegendentry{Our method};
    \addlegendimage{red!80!black,fill=red!30!white}
    \addlegendentry{Random forest};
    \addlegendimage{brown!80!black,fill=brown!30!white}
    \addlegendentry{XGBoost};
    \addlegendimage{teal!80!black,fill=teal!30!white}
    \addlegendentry{Formula size};
    \addlegendimage{dashed, purple!80!black,fill=purple!10!white}
    \addlegendentry{Literature};
\end{axis}
\end{tikzpicture}
\caption{The average test accuracies obtained with our method, random forests and XGBoost for all datasets. For all but the Colon dataset, also standard deviations are reported. For the UCI datasets we also include a comparison to the formula size method. When available, we have also included accuracies reported in literature, though these are not directly comparable due to different technical particularities in the experiments.}
\label{fig:resultsum}
\end{figure}

We compared our method empirically to random forests and XGBoost. These two comparison methods are very commonly used and state-of-the-art for tabular data. We tested all three methods on $12$ tabular datasets. The datasets can be categorized as follows.
\begin{enumerate}
    \item $7$ binary classification tasks from the UCI machine learning repository. Five of these were selected arbitrarily, while two further ones (BreastCancer and GermanCredit) were randomly chosen among the ones used in \cite{jelia23}. One of the 7 datasets (StudentDropout) was originally a ternary classification task; we converted it into a binary one. 
    \item $4$ tabular-data benchmarks out of $7$ binary classification benchmarks that were presented in the paper \cite{grinsztajn2022why}.
    These datasets were also originally from the UCI machine learning repository.
    \item a high-dimensional binary classification task containing biological data from an open-source feature selection repository presented in \cite{li2018feature}.
\end{enumerate}
All rows with missing values were removed from these datasets.\footnote{In the case of the Hepatitis dataset we removed two columns that had several missing values.} Most of the datasets contained both categorical and numerical features. While our method includes a rough median-based Booleanization of the data as discussed in Section \ref{sec:overfitting}, random forests and XGBoost used the original non-Booleanized data.


For all but one of the datasets we use ten-fold cross-validation for all methods. That is, we split the data into ten equal parts and for each such 10 percent part we input the remaining 90 percent into the method being tested, obtain a classifier (using the chosen 90 percent of the data) and record its accuracy on the 10 percent part. We report the averages and standard deviations of the accuracies over the ten 90/10 splits. 

For the high-dimensional dataset Colon with less than 100 data points, we use leave-one-out cross-validation. That is, for each data point, we set the point aside and input the remaining data into the method. We test whether the obtained classifier classifies the omitted point correctly or not. We report the average accuracy over all data points. This is a well-known standard practice for small datasets.

As mentioned in Section \ref{sec:overfitting}, our method utilizes readily available feature selection methods. We used Scikit-learn's \texttt{SelectKBest}, which returns $k$ features that have the highest scores based on a given univariate statistical test. \texttt{SelectKBest} supports three methods for calculating the scores for the features: F-test, mutual information, and $\chi^2$-test. When generating formulas, we tested all three of these methods and selected the best in terms of validation accuracy. We emphasize that feature selection was performed after Booleanizing the data.

If the empirical ideal classifier is allowed to use too many features, it will most likely overfit on its training data. As already discussed, we used nested cross-validation to determine how many features the empirical ideal classifier can use without overfitting. For nested cross-validation we used a $70/30$-split. In all of the experiments we allowed the empirical ideal classifier to use at most ten attributes. This is a hyperparameter one could optimize for each dataset separately. For simplicity we used the same limit for all datasets. 
In all of our experiments, the validation accuracy either declined or stagnated well before ten features, indicating that a larger number is not needed. On the other hand, for interpretability, ten features is perhaps even a bit excessive.

For random forests we used Scikit-learn's implementation. For both random forests and XGBoost, nested cross-validation was used for tuning the hyperparameters. As for our method, we used a \(70/30\)-split. For hyperparameter optimization, we used Optuna \cite{optuna_paper} with the number of trials being \(100\). The hyperparameter spaces used for random forests and XGBoost were the same as in \cite{grinsztajn2022why}. For the readers convenience, we also report the used hyperparameter spaces in Appendix \ref{appendix:hyperparameter_spaces}.


We also ran the method of \cite{jelia23} for some datasets. As in the Introduction, we refer to their approach as the \emph{formula-size method.} This method is computationally resource consuming, so we only ran it for the seven UCI repository datasets. The high-dimensional biological data and benchmark data would be too difficult for the method.


For the formula-size method we used the openly available implementation from \url{https://github.com/asptools/benchmarks}. The runs were conducted on a Linux cluster featuring Intel Xeon 2.40 GHz CPUs with 8 CPU cores per run, employing a timeout of 72 hours per instance and a memory limit of 64 GB.



\subsection{Results}\label{sec:results}

Figure \ref{fig:resultsum} contains the average test accuracies and standard deviations that were obtained by using our new method, random forests and XGBoost using ten-fold cross-validation. The left side of Figure \ref{fig:resultsum} also contains results obtained with the formula-size method of \cite{jelia23}. We also report accuracies that we found in the literature for these datasets. For the left side of Figure \ref{fig:resultsum}, we include the accuracies that were reported for these datasets in the UCI machine learning repository. For the right side of Figure \ref{fig:resultsum}, for the case of Colon, we used the result reported in \cite{Yang2021LocallySN}, while for remaining datasets we reported the accuries given in the full-version of \cite{grinsztajn2022why}.

We emphasize that the accuracies found from the literature are not directly comparable to the accuracies that we obtained with our method. For example, in some cases a single $70/30$-split was used instead of our ten-fold cross-validation. The conventions concerning the handling of missing values could also differ from ours. Furthermore, the accuracies in \cite{grinsztajn2022why} are given to the nearest percentage point.

From Figure \ref{fig:resultsum} we see that our method produces classifiers that mostly have accuracies similar to the ones obtained by  the alternative methods. In particular, on medical data such as BreastCancer and Hepatitis, our method compares well to the reference methods.
Perhaps the biggest gaps are found in the cases of CoverType and Electricity, two of the benchmark datasets. However, we emphasize that in \cite{grinsztajn2022why}, it was mentioned that all of the benchmark datasets were specifically selected to be ``not too easy''. Hence, one should not perhaps expect that ``simple'' classifiers with focus on interpretability will perform too well with these datasets.

We list the average runtime of our method on a single split of the data in the Appendix \ref{appendix:runtimes}. For six of the datasets, the average runtime was less than 30 seconds. Of the remaining datasets, four had  runtimes of less than 12 minutes. Even for the two largest datasets, Covertype and RoadSafety, the runtime is still less than an hour and, notably, systematically less than that of the hyperparameter optimizations of random forests and XGBoost. The experiments on our method, random forests and XBGoost were run on a laptop.

We turn our attention to our main goal of interpretability. 
In Table \ref{tab:featurenumbers} we report the average of the \emph{smallest number of features} sufficient to reach the reported accuracy. Table \ref{tab:featurenumbers} also gives the number of rows and attributes in the Booleanized datasets as well as the original datasets. 
We see from Table \ref{tab:featurenumbers} that for eight out of 12 datasets, the best accuracy was reached already with on average less than four features, leading to very short formulas. Such short DNF-formulas are globally interpretable as follows. To interpret a short DNF-formula, one looks at each of the few conjunctions in the formula individually to determine their meaning. This is easy as each conjunction has very few attributes. The meaning of the entire formula is then given by the fact that it accepts only the cases given by the conjunctions and rejects everything else.





Next we discuss the results obtained using the formula-size method of \cite{jelia23}. By the size of a propositional formula they mean the number of proposition symbols, conjunctions, disjunctions and negations in the formula. The results are given in Figure \ref{fig:resultsum}.

We see that for most datasets on the left side in Figure \ref{fig:resultsum}, the accuracies obtained by the formula-size method are similar to our method. A meaningful difference can be seen in the HeartDisease dataset, where both the accuracy of 73.6 percent and standard deviation of 12.2 percent are significantly behind other methods. The formulas obtained are short, with formula size mostly less than 10.

The downside of the formula-size method is computational resources. As described in the previous subsection, we used a Linux cluster featuring Intel Xeon 2.40 GHz CPUs with 8 CPU cores per run, employing a timeout of 72 hours per instance and a memory limit of 64 GB. For all but one dataset we ran the formula-size method on, the computation took more than 24 hours per split of the data with this high efficiency setup. Even for the easiest one, BreastCancer, the computation time was in the order of 12 hours per split. Furthermore, for the benchmark datasets and the high-dimensional biological dataset Colon, the formula-size method is unfeasible already for very small formula sizes and thus we do not report any results for those datasets.



In contrast to the formula-size method, for all but the two largest datasets, the runs of our method took less than 12 minutes per split, and in six cases only seconds. More detailed runtimes are reported in Appendix \ref{appendix:runtimes}. We conclude that our method obtains similar or better results than the formula-size method with a fraction of the computational resources on datasets where both methods are feasible. Our method is also usable on larger datasets, where the formula-size method is is not.

\begin{table}[tb]
    \begin{center}
    {\caption{The average number of features used by the final formulas for each dataset. We also report the total numbers of features in the Booleanized and original datasets for comparison as well as the numbers of data points.}
    \label{tab:featurenumbers}}
    \vspace{10pt}
    \begin{tabular}{|l|c|c|c|c|}
        \hline 
        Data set & \makecell{Selected \\ features} & \makecell{Boolean \\ features} & \makecell{Original \\ features} & \makecell{Data \\ points} \\
        \hline
        BankMarketing & 1.3 & 51 & 51 & 4521 \\
        \hline
        BreastCancer & 3.8 & 9 & 9 & 683 \\
        \hline
        CongressionalVoting & 1.0 & 48 & 16 & 435 \\
        \hline
        GermanCredit & 3.9 & 61 & 20 & 1000 \\
        \hline
        HeartDisease & 3.9 & 25 & 13 & 303 \\
        \hline
        Hepatitis & 2.7 & 74 & 17 & 155 \\
        \hline
        StudentDropout & 5.8 & 112 & 36 & 4424 \\
        \hline
        Covertype & 2.5 & 54 & 54 & 423680 \\
        \hline
        Electricity & 7.5 & 14 & 8 & 38474 \\
        \hline
        EyeMovement & 5.5 & 26 & 23 & 7608 \\
        \hline
        RoadSafety & 5.4 & 324 & 32 & 111762 \\
        \hline
        Colon & 2.1 & 5997 & 2000 & 63 \\
        \hline
    \end{tabular}
    \end{center}
\end{table}

\subsection{Examples of obtained formulas}\label{ssec:examples}

Here we present selected examples of formulas that were generated by our method.
%
%
Firstly, we refer the reader back to the example formulas concerning the BreastCancer and Colon datasets presented in the Introduction. Both of these formulas were very short and had high accuracies.

Another example formula is from the HeartDisease dataset, where the task is to determine,
whether a patient has a heart disease. We obtain the formula
\[
(p_1 \land p_2) \lor (p_1 \land \neg p_3) \lor (p_2 \land \neg p_3),
\]
where the attributes $p_1$, $p_2$ and $p_3$ relate to a color test of blood vessels, fixed thallium defects and chest pain, respectively. This formula is obtained from two different splits of the data and the respective accuracies are 76.7 percent and 86.7 percent. The average accuracy of our method on this dataset is 81.2 percent. Random forests average at 81.8 percent, XGBoost at 80.8 percent and the formula size method at 73.6 percent.


The Hepatitis dataset concerns the mortality of patients with hepatitis. The formula
\[
p_1 \lor \neg p_2 \lor \neg p_3
\]
is obtained from five of the ten splits of the data. Here $p_1$, $p_2$ and $p_3$ relate to albumin, bilirubin and a histology test respectively. The accuracies of this formula on the five different test datasets range from 61.5 percent to 92.3 percent. The average accuracy of our method on the Hepatitis dataset is 80.7 while random forests get 78.2 percent and XGBoost 79.0 percent. The high variance seems to be due to the data only consisting of 155 data points. This is supported by the fact that all of the tested methods obtain a high standard deviation of accuracy on this dataset.


In the Appendix \ref{appendix:first_formulas}, there are further examples of formulas obtained as outputs of our method. For each dataset, we list the formula obtained for the first 90/10 split of our ten-fold cross-validation.  
While some of these formulas are quite long, they should still be rather fast to use for local explanations  explicating why a particular input was classified in a particular way (cf. the Introduction for details on local explanations). For these experiments, we used the same maximum number 10 of features and the same limit of 1 percentage point to choose an accurate enough formula. These are hyperparameters that one could optimize to obtain a better balance of interpretability and accuracy on any specific dataset. For some datasets, raising the number of maximum features could lead to more accurate, but longer, formulas. Raising the percentage threshold on the other hand would lead to shorter, more interpretable formulas at the cost of some accuracy.


\section{Conclusions}\label{sec:conclusions}

We have introduced a new method to compute immediately interpretable Boolean classifiers for tabular data. While the main point is interpretability, even the accuracy of our formulas is similar to the ones obtained via widely recognized current methods. We have also established a theoretical result for estimating if the obtained formulas actually correspond to ideally accurate ones in relation to the background distribution. 
In the future, we will especially consider more custom-made procedures of discretization in the context of our method, as this time discretization was carried out in a very rough way to demonstrate the effectiveness and potential of our approach. We expect this to significantly improve our method over a notable class of tabular datasets. 

Our approach need not limit to Boolean formulas only, as we can
naturally extend our work to general relational data. We can use,
e.g., description logics and compute concepts $C_1,\dots , C_k$ and
then perform our procedure using $C_1,\dots, C_k$, finding short
Boolean combinations of concepts. This of course differs from the
approach of computing empirical ideal classifiers in the original
description logic, but can nevertheless be fruitful and
interesting. We leave this for future work.







\bibliographystyle{plain}
\bibliography{mybibfile}


\section{Appendix.}

\subsection{Proof of Theorem \ref{thm:sample_size}}\label{appendix:proof_of_the_theorem}

Let $\tau$ be a propositional vocabulary and $q \not\in \tau$ the proposition symbol that we need to explain. Let $\tau^+ := \tau \cup \{q\}$ and fix a probability distribution $\mu : T_{\tau^+} \to [0,1]$. In what follows, for notational simplicity we will assume that $\mu(t \land q) > \mu(t \land \neg q)$, for every $\tau$-type $t$.

Let $M$ be a sample of size $n$, i.e., a $\tau^+$-model of size $n$. For each $\tau^+$-type $t$ we use $\llbracket t \rrbracket_M$ to denote the number of times $t$ is realized in $M$. The main idea of the proof is to show that if 
\[n \geq \frac{2\ln(2^{|\tau| + 1}/\delta)}{\varepsilon^2},\]
then 
\begin{align*}\Pr\bigg[\text{for every $\tau$-type $t$ for which $\mu(t \land q) - \mu(t \land \neg q) \geq \varepsilon$}\\ \text{we have that } \llbracket t \land q \rrbracket_M > \llbracket t \land \neg q \rrbracket_M \bigg] \geq 1 - \delta.
\end{align*} 
Indeed, the above clearly implies that also with probability at least $1 - \delta$ we have that the empirical ideal classifier with respect to $M$ agrees with the ideal classifier with respect to all $\tau$-types $t$ with $\mu(t \land q) - \mu(t \land \neg q) \geq \varepsilon$.

Consider first a fixed $\tau$-type $t$ such that $\mu(t \land q) - \mu(t \land \neg q) \geq \varepsilon$. Now, observe that if
\[n(\mu(t \land q) - \theta) < \llbracket t \land q \rrbracket_M \text{ and } \llbracket t \land \neg q \rrbracket_M < n(\mu(t \land \neg q) + \theta),\]
then
\[\llbracket t \land q \rrbracket_M > \llbracket t \land \neg q \rrbracket_M\]
holds provided that
\[\mu(t \land q) - \mu(t \land \neg q) \geq 2\theta.\]
Setting $\theta := \varepsilon/2$, we have by assumption that
\begin{align*}
    & \Pr[\llbracket t \land q \rrbracket_M \geq \llbracket t \land \neg q \rrbracket_M] \\
    & \geq \Pr[n(\mu(t \land q) - \theta) < \llbracket t \land q \rrbracket_M \text{ and } \llbracket t \land \neg q \rrbracket_M < n(\mu(t \land \neg q) + \theta)] \\
    & \geq 1 - (\Pr[n(\mu(t \land q) - \theta) \geq \llbracket t \land q \rrbracket_M] + \Pr[\llbracket t \land \neg q \rrbracket_M \geq n(\mu(t \land \neg q) + \theta)])
\end{align*}
For the second inequality we used the union bound. To get a lower bound of $1 - \delta$ on the latter probability, it suffices to show that
\[\Pr[n(\mu(t \land q) - \theta) \geq \llbracket t \land q \rrbracket_M] \leq \frac{\delta}{2}\]
and
\[\Pr[\llbracket t \land \neg q \rrbracket_M \geq n(\mu(t \land \neg q) + \theta)] \leq \frac{\delta}{2}\]
Hoeffding's inequalities \cite{Hoeffding} imply that these bounds hold provided that 
\[n \geq \frac{\ln(2/\delta)}{2\theta^2} = \frac{2\ln(2/\delta)}{\varepsilon^2}.\]
Thus, if we take sample $M$ of at least size
\[\frac{2\ln(2/\delta)}{\varepsilon^2},\]
then with probability at least $1 - \delta$ we have that $\llbracket t \land q \rrbracket_M > \llbracket t \land \neg q \rrbracket_M$.

So far we focused on a single $\tau$-type $t$. Using union bound again we see that 
\begin{align*}
     \Pr\bigg[\text{for every $\tau$-type $t$ for which $\mu(t \land q) - \mu(t \land \neg q) \geq \varepsilon$} & \\
       \text{we have that }  \llbracket t  \land q \rrbracket_M > \llbracket t \land \neg q \rrbracket_M \bigg] \\
     \geq 1 - \sum \Pr[\llbracket t & \land q \rrbracket_M \leq \llbracket t \land q \rrbracket_M],
\end{align*}
where the sum is performed with respect to $\tau$-types $t$ for which $\mu(t \land q) - \mu(t \land \neg q) \geq \varepsilon$. Setting again $\theta := \varepsilon/2$ it follows from our previous calculations that for every $\tau \in T_\tau$ for which $\mu(t \land q) - \mu(t \land \neg q) \geq \varepsilon$ we have that
\[\Pr[\llbracket t \land q \rrbracket_M \leq \llbracket t \land q \rrbracket_M] \leq \delta/2^{|\tau|},\]
provided that
\[n \geq \frac{\ln(2^{|\tau| + 1}/\delta)}{2\theta^2} = \frac{2\ln(2^{|\tau| + 1}/\delta)}{\varepsilon^2}.\]
In particular, we will then have that 
\begin{align*}\Pr\bigg[\text{for every $\tau$-type $t$ for which $\mu(t \land q) - \mu(t \land \neg q) \geq \varepsilon$}\\
\text{ we have that } \llbracket t \land q \rrbracket_M > \llbracket t \land \neg q \rrbracket_M \bigg] \geq 1 - \delta,
\end{align*}
which is what we wanted to show.

\subsection{Hyperparameter spaces for Random Forest and XGBoost}\label{appendix:hyperparameter_spaces}

Hyperparameters not listed here were kept at their default values.

\subsubsection{Random Forest}

\begin{itemize}
    \item \texttt{max\_depth}: None, 2, 3, 4
    \item \texttt{n\_estimators}: Integer sampled from \([9.5,3000.5]\) using log domain
    \item \texttt{criterion}: gini, entropy
    \item \texttt{max\_features}: sqrt, log2, None, 0.1, 0.2, 0.3, 0.4, 0.5, 0.6, 0.7, 0.8, 0.9
    \item \texttt{min\_samples\_split}: 2, 3
    \item \texttt{min\_samples\_leaf}: Integer sampled from \([1.5,50.5]\)
    \item \texttt{bootstrap}: True, False
    \item \texttt{min\_impurity\_decrease}: 0.0, 0.01, 0.02, 0.05
\end{itemize}

\subsubsection{XGBoost}

\begin{itemize}
    \item \texttt{max\_depth}: Integer sampled from [1,11]
    \item \texttt{n\_estimators}: Integer sampled from [100,5900] with step-size being 200.
    \item \texttt{min\_child\_weight}: Float sampled from [1.0, 100.0] using log domain
    \item \texttt{subsample}: Float sampled from [0.5, 1.0]
    \item \texttt{learning\_rate}: Float sampled from [1e-5, 0.7] using log domain
    \item \texttt{colsample\_bylevel}: Float sampled from [0.5, 1.0]
    \item \texttt{gamma}: Float sampled from [1e-8, 7.0] using log domain
    \item \texttt{reg\_lambda}: Float sampled from [1.0, 4.0] using log domain
    \item \texttt{reg\_alpha}: Float sampled from [1e-8, 100.0] using log domain
\end{itemize}

\subsection{Runtimes}\label{appendix:runtimes}

We report the average runtime of our method over the different splits of the ten-fold (or in the case of Colon leave-one-out) cross-validation. The experiments were run on a standard laptop.

    \begin{center}
    \vspace{10pt}
    \begin{tabular}{|l|c|}
        \hline 
        Data set & \makecell{Runtime of our method} \\
        \hline
        BankMarketing & 1 minute 30 seconds  \\
        \hline
        BreastCancer & 13 seconds  \\
        \hline
        CongressionalVoting & 10 seconds  \\
        \hline
        GermanCredit & 26 seconds  \\
        \hline
        HeartDisease & 11 seconds \\
        \hline
        Hepatitis & 2 seconds  \\
        \hline
        StudentDropout & 1 minute 26 seconds  \\
        \hline
        Covertype & 45 minutes 10 seconds \\
        \hline
        Electricity & 11 minutes 54 seconds  \\
        \hline
        EyeMovement & 3 minutes 0 seconds \\
        \hline
        RoadSafety & 28 minutes 37 seconds \\
        \hline
        Colon & 19 seconds \\
        \hline
    \end{tabular}
    \end{center}

\subsection{Example formula for each dataset}\label{appendix:first_formulas}

Here we report for each dataset the formula given by the first of the ten splits in our ten-fold cross-validation. 

\subsubsection{BankMarketing}

\[
poutcome\_success
\]

\subsubsection{BreastCancer}

\scalebox{0.7}{
\begin{minipage}{\linewidth}
\begin{align*}
&(\neg bare\_nuclei\_above\_median \land \neg bland\_chromatin\_above\_median) \\
&\lor (\neg bland\_chromatin\_above\_median \land \neg single\_epithelial\_cell\_size\_above\_median) \\
&\lor (\neg bland\_chromatin\_above\_median \land \neg uniformity\_of\_cell\_size\_above\_median) \\
&\lor (bare\_nuclei\_above\_median \land \neg single\_epithelial\_cell\_size\_above\_median \land \neg uniformity\_of\_cell\_size\_above\_median) \\
&\lor (single\_epithelial\_cell\_size\_above\_median \land \neg bare\_nuclei\_above\_median \land \neg uniformity\_of\_cell\_size\_above\_median) \\
&\lor (uniformity\_of\_cell\_size\_above\_median \land \neg bare\_nuclei\_above\_median \land \neg single\_epithelial\_cell\_size\_above\_median)
\end{align*}
\end{minipage}
}

\subsubsection{CongressionalVoting}

\[
physician\_fee\_freeze\_y
\]

\subsubsection{GermanCredit}

\begin{align*}
    &(A1\_A14 \land \neg A1\_A11 \land \neg A3\_A30) \\
    &\lor (A1\_A14 \land \neg A1\_A11 \land \neg A3\_A31) \\
    &\lor (A6\_A65 \land \neg A1\_A11 \land \neg A3\_A30) \\
    &\lor (\neg A1\_A14 \land \neg A3\_A30 \land \neg A3\_A31) \\
    &\lor (A1\_A11 \land A6\_A65 \land \neg A1\_A14 \land \neg A3\_A31)
\end{align*}

\subsubsection{HeartDisease}

\[
(caa\_0 \land thall\_2) \lor (caa\_0 \land \neg oldpeak\_above\_median) \lor (thall\_2 \land \neg oldpeak\_above\_median)
\]

\subsubsection{Hepatitis}

\[
albumin\_above\_median \lor \neg bilirubin\_above\_median \lor \neg histology
\]

\subsubsection{StudentDropout}

\scalebox{0.7}{
\begin{minipage}{\linewidth}
\begin{align*}
&(\neg Curricular\_units\_1st\_sem\_grade\_above\_median \land \neg Tuition\_fees\_up\_to\_date) \\
&\lor (\neg Curricular\_units\_2nd\_sem\_approved\_above\_median \land \neg Tuition\_fees\_up\_to\_date) \\
&\lor (\neg Curricular\_units\_1st\_sem\_approved\_above\_median \land \neg Curricular\_units\_1st\_sem\_grade\_above\_median \\
&\ \ \ \ \ \land \neg Curricular\_units\_2nd\_sem\_approved\_above\_median \land \neg Curricular\_units\_2nd\_sem\_grade\_above\_median)
\end{align*}
\end{minipage}
}

\subsubsection{Covertype}

\[(Elevation\_above\_median \land \neg Soil\_Type12) \lor (Soil\_Type22 \land \neg Soil\_Type12)\]

\subsubsection{Electricity}

\scalebox{0.6}{
\begin{minipage}{\linewidth}
\begin{align*}{
&(nswprice\_above\_median \land transfer\_above\_median \land vicprice\_above\_median) \\
&\lor (nswprice\_above\_median \land vicprice\_above\_median \land \neg nswdemand\_above\_median) \\
&\lor (day\_0 \land nswdemand\_above\_median \land period\_above\_median \land transfer\_above\_median \land vicdemand\_above\_median) \\
&\lor (nswdemand\_above\_median \land nswprice\_above\_median \land \neg day\_0 \land \neg period\_above\_median) \\
&\lor (nswdemand\_above\_median \land nswprice\_above\_median \land \neg vicdemand\_above\_median \land \neg vicprice\_above\_median) \\
&\lor (nswprice\_above\_median \land period\_above\_median \land \neg day\_0 \land \neg vicdemand\_above\_median) \\
&\lor (nswprice\_above\_median \land vicdemand\_above\_median \land \neg day\_0 \land \neg transfer\_above\_median) \\
&\lor (day\_0 \land nswdemand\_above\_median \land period\_above\_median \land vicprice\_above\_median \land \neg transfer\_above\_median) \\
&\lor (nswdemand\_above\_median \land period\_above\_median \land transfer\_above\_median \land vicprice\_above\_median \land \neg day\_0) \\
&\lor (day\_0 \land period\_above\_median \land transfer\_above\_median \land \neg nswprice\_above\_median \land \neg vicprice\_above\_median) \\
&\lor (nswdemand\_above\_median \land vicdemand\_above\_median \land vicprice\_above\_median \land \neg day\_0 \land \neg period\_above\_median) \\
&\lor (day\_0 \land nswprice\_above\_median \land \neg period\_above\_median \land \neg vicdemand\_above\_median \land \neg vicprice\_above\_median) \\
&\lor (day\_0 \land nswprice\_above\_median \land \neg transfer\_above\_median \land \neg vicdemand\_above\_median \land \neg vicprice\_above\_median) \\
&\lor (day\_0 \land vicprice\_above\_median \land \neg nswdemand\_above\_median \land \neg period\_above\_median \land \neg vicdemand\_above\_median) \\
&\lor (nswdemand\_above\_median \land transfer\_above\_median \land \neg period\_above\_median \land \neg vicdemand\_above\_median \land \neg vicprice\_above\_median) \\
&\lor (vicdemand\_above\_median \land \neg day\_0 \land \neg nswdemand\_above\_median \land \neg period\_above\_median \land \neg transfer\_above\_median) \\
&\lor (day\_0 \land vicdemand\_above\_median \land \neg nswdemand\_above\_median \land \neg nswprice\_above\_median \land \neg transfer\_above\_median \land \neg vicprice\_above\_median)
}\end{align*}
\end{minipage}
}

\subsubsection{EyeMovement}

\scalebox{0.7}{
\begin{minipage}{\linewidth}
\begin{align*}
    &(nextWordRegress\_0 \land \neg nextWordRegress\_1 \land \neg wordNo\_above\_median) \\
    &\lor (nextWordRegress\_0 \land prevFixPos\_above\_median \land \neg nextWordRegress\_1 \land \neg regressDur\_above\_median) \\
    &\lor (nextWordRegress\_1 \land \neg nextWordRegress\_0 \land \neg prevFixPos\_above\_median \land \neg regressDur\_above\_median \\
    &\ \ \ \ \ \ \ \land \neg wordNo\_above\_median)
\end{align*}
\end{minipage}
}

\subsubsection{RoadSafety}\label{app:roadsafetyformula}

\scalebox{0.63}{
\begin{minipage}{\linewidth}
\begin{align*}
    &(Casualty\_Type\_9 \land Sex\_of\_Casualty\_1 \land Vehicle\_Type\_9 \land \neg Sex\_of\_Casualty\_0) \\
    &\lor (Propulsion\_Code\_1 \land Sex\_of\_Casualty\_1 \land Vehicle\_Type\_9 \land \neg Sex\_of\_Casualty\_0) \\
    &\lor (Sex\_of\_Casualty\_1 \land Vehicle\_Type\_9 \land \neg Engine\_Capacity\_CC\_above\_median \land \neg Sex\_of\_Casualty\_0) \\
    &\lor (Propulsion\_Code\_1 \land Sex\_of\_Casualty\_0 \land Vehicle\_Type\_9 \land \neg Casualty\_Type\_9 \land \neg Sex\_of\_Casualty\_1) \\
    &\lor (Propulsion\_Code\_1 \land Sex\_of\_Casualty\_1 \land \neg Casualty\_Type\_9 \land \neg Engine\_Capacity\_CC\_above\_median \land \neg Sex\_of\_Casualty\_0) \\
    &\lor (Sex\_of\_Casualty\_0 \land Vehicle\_Type\_9 \land \neg Casualty\_Type\_9 \land \neg Engine\_Capacity\_CC\_above\_median \land \neg Sex\_of\_Casualty\_1)
\end{align*}
\end{minipage}
}

\subsubsection{Colon}

For the high-dimensional dataset Colon we used leave-one-out cross-validation; we report the formula obtained from leaving the first datapoint out.

\[v765\_two\]

\end{document}